\documentclass{article}

\usepackage{arxiv-preprint,times}
\usepackage{amsmath,amssymb,amsthm}
\usepackage{booktabs}
\usepackage{graphicx}
\usepackage{float}
\usepackage{placeins}
\usepackage{microtype}
\usepackage{xcolor}
\usepackage{hyperref}
\hypersetup{hidelinks,bookmarksopen=true,bookmarksnumbered=true,
  pdftitle={Short-to-Long: Detached Features for Long-Horizon Value Learning},
  pdfsubject={Reinforcement learning},
  pdfkeywords={reinforcement learning, representation learning, value learning}}
\usepackage{url}
\usepackage{tikz}
\usetikzlibrary{arrows.meta,calc,fit,positioning}

\definecolor{longblue}{HTML}{2F5D8A}
\definecolor{shortorange}{HTML}{D97706}
\definecolor{bankgreen}{HTML}{2F855A}

\newtheorem{theorem}{Theorem}
\newcommand{\sg}{\operatorname{sg}}
\newcommand{\E}{\mathbb{E}}
\newcommand{\method}{Short-to-Long}
\newcommand{\shortmethod}{S2L}
\newcommand{\hthreed}{S2L-4}
\newcommand{\sharedhead}{MDH-4}

\title{Short-to-Long: Detached Features\\for Long-Horizon Value Learning}

\author{Tong Che \quad Boris Ivanovic \quad Marco Pavone\\
NVIDIA Research}
\hypersetup{pdfauthor={Tong Che, Boris Ivanovic, Marco Pavone}}

\begin{document}
\raggedbottom

\maketitle

\begin{abstract}
Learning useful representations from continually changing value targets
remains a challenge in deep reinforcement learning.  We study the temporal
horizon of this supervision: smaller discounts make these targets less
sensitive to distant bootstrap changes, but using them for behavior changes
the control objective.  We
introduce \emph{\method} (\shortmethod), which supplies a long-horizon
behavior learner with features trained by smaller-discount source networks.
Detached connections prevent the behavior value loss from updating the
sources, allowing each source to follow its own value objective while the
behavior discount remains unchanged.  Across Atari, online continuous control, and offline continuous control,
the approach yields aggregate performance gains, improving the paired
terminal median on 40 of 57 Atari games.  Matched Atari ablations support
the roles of feature transfer, smaller source discounts, and gradient
isolation.  Fixed-trajectory measurements further show reduced drift in
source value targets with mixed discounts.
\end{abstract}

\section{Introduction}

Deep reinforcement learning relies on representations that remain useful
as an agent's predictions and behavior change.  Learning these representations
is difficult because the supervision itself is non-stationary: policy updates
change the experience distribution, while value updates change the bootstrap
predictions used as targets.  In an end-to-end value network, these targets
also shape the hidden features on which future predictions depend.  This
feedback connects representation learning to the optimization difficulties
observed in deep RL
\citep{sokar2023dormant,castanyer2025stable,pasand2026isotropic,
lelan2023bootstrapped,tang2023representation}.

Most responses change the architecture or the optimizer: preserving
gradient flow across depth \citep{castanyer2025stable}, residual critics
\citep{lee2025simba}, or auxiliary prediction targets that shape
representations \citep{jaderberg2017unreal,schwarzer2021spr}.  The temporal
structure of the value supervision itself is left untouched.  We examine
that horizon.  In an end-to-end value network, the discount factor
determines both how future rewards enter the control objective and the
targets used to train hidden features.  These two roles need not be coupled
for every source of features.

Multiple discounts have appeared in deep RL before, but always within one
set of parameters: auxiliary heads on a shared trunk
\citep{fedus2019hyperbolic}, or a decomposition of one long-horizon value
into differences between neighboring discounts
\citep{romoff2019separating,tang2021taylor}.  Every objective then updates
the same features, so the longest-horizon loss still shapes them.

Along a fixed trajectory, reducing the discount attenuates the contribution
of a distant bootstrap change to the current target
\citep{amit2020discount,kozuno2021peng}.
We make this quantitative: a change $k$ steps away is transmitted with a
factor $\gamma^k$, at $k=32$ some $21.1\times$ smaller for $\gamma=.9$ than
for $\gamma=.99$.  A shared-feature analysis further shows that the
longest-horizon loss dominates the feature update unless its gradient is
removed, which motivates isolating the sources.
Applying the same reduction to the behavior learner would change how it
values future rewards.  We instead investigate whether shorter-horizon
value objectives can supply useful features for a learner that retains
its original control objective.

\emph{\method} (\shortmethod) implements this idea with separately trained
value networks that share experience.  Source learners use smaller discounts
and pass their hidden activations to the receiving learner through
stop-gradient connections.  Each source is updated only by its own value
loss, so the receiving loss cannot redirect it toward the longer-horizon
target.  All learners remain active throughout training, and only the final
network supplies the action values or actor objective.  The behavior learner
can therefore use shorter-horizon features while continuing to optimize
its original discounted objective.  Figure~\ref{fig:architecture} shows
the four-learner instantiation used in our main online experiments.

\paragraph{Contributions.}
\begin{itemize}\itemsep1pt\parskip0pt
\item \shortmethod{}: detached feature transfer from smaller-discount source
networks into a behavior learner whose discount and control objective are
unchanged.
\item An account of how the discount governs the transmission of bootstrap
changes into truncated $\lambda$-returns, and a shared-feature model
(Theorem~\ref{thm:target-feature}) in which the longest-horizon loss
dominates the feature update unless its gradient is removed.
\item Gains in all three settings---$+.049$ median paired human-normalized
score over the 57-game Atari suite relative to scaled PQN
\citep{castanyer2025stable}, 465 of 720 matched DeepMind Control Suite
pairs, and nine offline MuJoCo datasets---with matched ablations isolating
each design choice, direct target-drift measurements, and a
parameter-matched shared-trunk control.
\end{itemize}

\section{Short-to-Long}
\label{sec:method}
\label{sec:h38d}

\hthreed{} consists of four value learners ordered from shorter to longer
horizon.  Each learner has separate parameters, normalization statistics,
optimizer state, and targets.  They share training transitions, not parameters.

\begin{figure}[!htbp]
    \centering
    \begin{tikzpicture}[
    x=1cm,y=1cm,>=Latex,
    network/.style={draw,rounded corners=2pt,minimum width=1.8cm,
                    minimum height=.57cm,align=center,font=\small},
    source/.style={network,draw=shortorange,fill=orange!6},
    behavior/.style={network,draw=longblue,fill=blue!6},
    sum/.style={circle,draw=bankgreen,fill=green!5,inner sep=1.5pt,font=\small},
    flow/.style={->,draw=black!70,line width=.65pt},
    transfer/.style={->,draw=bankgreen,dashed,line width=1pt},
    heading/.style={align=center,font=\small},
    edge label/.style={font=\footnotesize,fill=white,inner sep=1.5pt}
]
\node[font=\small,align=center] at (5.4,3.45)
    {Same training transitions; four separate value objectives};

\foreach \k/\x/\discount/\horizon in {
    1/1.05/.75/4,
    2/3.8/{11/12}/12,
    3/6.55/{34/35}/35,
    4/9.3/.99/100
} {
    \ifnum\k=4
        \node[heading,text=longblue] at (\x,2.7)
            {\textbf{Behavior learner}\\$\gamma_{\k}=\discount$};
        \node[behavior] (enc\k) at (\x,1.75) {encoder};
        \node[behavior] (block\k) at (\x,.55) {block $l$};
    \else
        \node[heading,text=shortorange] at (\x,2.7)
            {\textbf{Source \k}\\$\gamma_{\k}=\discount$};
        \node[source] (enc\k) at (\x,1.75) {encoder};
        \node[source] (block\k) at (\x,.55) {block $l$};
    \fi
    \draw[flow] (enc\k) -- node[edge label] {$\vdots$} (block\k);
    \ifnum\k=1
        \node[circle,fill=bankgreen,inner sep=2pt] (sum\k) at (\x,-.15) {};
    \else
        \node[sum] (sum\k) at (\x,-.15) {$+$};
    \fi
    \draw[flow] (block\k) -- (sum\k);
    \ifnum\k=4
        \node[behavior] (q\k) at (\x,-1.15) {$Q^{(\k)}$};
    \else
        \node[source] (q\k) at (\x,-1.15) {$Q^{(\k)}$};
    \fi
    \draw[flow] (sum\k) -- node[edge label] {$\vdots$} (q\k);
    \node[heading] (loss\k) at (\x,-2.05) {own target\\loss $\mathcal L_{\k}$};
    \draw[flow] (q\k) -- (loss\k);
}
\foreach \a/\b in {1/2,2/3,3/4} {
    \draw[transfer] (sum\a.east) --
        node[edge label,above] {$\sg[h^{(\a)}_l]$} (sum\b.west);
}
\node[network,draw=longblue,fill=blue!12,minimum width=1.45cm,
      font=\footnotesize] (act) at (11.5,-1.15) {actions /\\actor objective};
\draw[flow,draw=longblue] (q4.east) -- (act.west);
\node[font=\footnotesize,text=bankgreen,align=center] at (5.15,-2.85)
    {Dashed arrows: forward feature transfer, no receiving-loss gradient to the source.\\
     The aligned addition is repeated at every residual block.};
\end{tikzpicture}
    \caption{\textbf{S2L-4: learn separately, transfer features from short to long.}
    All four learners train on the same transitions.  At each residual depth,
    a learner receives its predecessor's activation through a detached
    addition.  Each value loss updates only its own network.  The final
    $\gamma=.99$ learner supplies action values or the actor objective.}
    \label{fig:architecture}
\end{figure}

\subsection{Four horizons, one behavior objective}

We use the fixed discounts
\begin{equation}
 (\gamma_1,\gamma_2,\gamma_3,\gamma_4)
 =\left(.75,\frac{11}{12},\frac{34}{35},.99\right),
 \qquad
 \frac{1}{1-\gamma_k}\in\{4,12,35,100\}.
 \label{eq:h38d-gammas}
\end{equation}
The final learner, $Q^{(4)}$, retains the base algorithm's control objective
and Bellman-target construction.  The first three learners supply features;
their Q-values are not averaged into the behavior value or used to mix its
targets.  In Atari, actions are epsilon-greedy with respect to $Q^{(4)}$.
In continuous control, the actor is optimized using only its value.

\subsection{Detached additions}

Let $\bar h^{(k)}_l=f^{(k)}_l(h^{(k)}_{l-1})$ be learner $k$'s activation
after its ordinary block, including residual and skip connections.  Evaluate
the networks from shortest to longest horizon and add the preceding
learner's same-depth activation:
\begin{equation}
 h^{(1)}_l=\bar h^{(1)}_l,\qquad
 h^{(k)}_l=\bar h^{(k)}_l+\sg\!\left[h^{(k-1)}_l\right],
 \quad k=2,3,4.
 \label{eq:h38d-chain}
\end{equation}
Here $\sg$ is the identity in the forward pass and stops gradients in the
backward pass.  There are no learned gates or projection parameters.
Because the predecessor activation already includes its own transfer,
information can pass through the entire short-to-long chain.

If $\theta_j$ denotes learner $j$'s parameters and $\mathcal L_k$ denotes
learner $k$'s value loss, then
\begin{equation}
 \nabla_{\theta_j}\mathcal L_k=0\quad\text{for }j<k.
 \label{eq:isolation}
\end{equation}
Each network nevertheless continues to learn from its own loss.
The two-learner interfaces used in the development studies follow the same
principle; their definitions are in Appendix~\ref{app:read-interfaces}.

\subsection{Training and domain instantiations}

For online Atari, the four CNN--MultiSkip networks each form a Q($\lambda$)
return.  With $V^{(k)}(s)=\max_a Q^{(k)}(s,a)$ and terminal masks omitted
from the notation,
\begin{equation}
 G^{(k)}_t=r_t+\gamma_k\!\left[(1-\lambda)V^{(k)}(s_{t+1})
                    +\lambda G^{(k)}_{t+1}\right],
 \qquad G^{(k)}_T=V^{(k)}(s_T).
 \label{eq:qlambda}
\end{equation}
All learners regress on the action $a_t$ actually taken:
\begin{equation}
 \mathcal L_k=\E\!\left[(Q^{(k)}(s_t,a_t)-\sg[G^{(k)}_t])^2\right],
 \qquad \mathcal L=\sum_{k=1}^{4}\mathcal L_k.
 \label{eq:loss}
\end{equation}
The losses are summed without averaging and gradients are clipped separately
per learner.  We use $\lambda=.65$ and 32-step rollouts.

For online continuous control, four disjoint SimBa critics replace the
single critic and use one-step targets with their respective discounts.
The actor and interaction protocol are unchanged.  Critic updates detach
predecessor features as in Eq.~\eqref{eq:h38d-chain}; actor updates hold all
critic parameters fixed while retaining action derivatives through all four
critics.  Atari and continuous-control settings are recorded in
Appendices~\ref{app:details}, \ref{app:dmc-details},
and~\ref{app:offline-details}.

\paragraph{Continuous-control update.}
Each replay batch is processed in four steps:
\begin{enumerate}
    \item Form four one-step targets from the same transitions using the
    discounts in Eq.~\eqref{eq:h38d-gammas}, including the corresponding
    SAC entropy term when applicable.
    \item Evaluate the critics from shortest to longest horizon, adding
    the immediate predecessor's activation at each aligned residual block.
    \item Stop gradients through predecessor features, sum the four TD
    losses, and update each critic with its own optimizer state.
    \item Update the actor from the final $.99$ value.  Keep predecessor
    action derivatives live and critic parameters fixed, then apply the
    base algorithm's target-network update.
\end{enumerate}
The same transferred activations therefore support both value learning and
policy improvement: detachment isolates the critics' TD losses, while action
derivatives carry their influence into the actor update.

\section{Online Control Results}
\label{sec:experiments}

We compare \hthreed{} with scaled PQN on Atari and with SimBa on continuous
control, matching environment-interaction budgets and paired seeds.
Each S2L learner uses the baseline network architecture; total capacity
scales with the number of learners.  Value networks at this scale are
small enough that throughput is dominated by simulation rather than network
evaluation, which is why deep RL comparisons use matched interaction budgets
rather than matched compute; across the 57 games the four learners add
$12.3\pm6.2\%$ wall-clock training time.

\subsection{Atari: gains across the full 57-game suite}

\hthreed{} improves terminal performance on 40 of 57 games, with a median
paired game improvement of $+.049$ human-normalized score (HNS).
Figure~\ref{fig:atari57-gain} reports the complete per-game comparison.
Using a descriptive $\pm.01$ band, 36 games improve by more than $+.01$,
11 fall within the band, and 10 fall below $-.01$.

\paragraph{Setup.}
We compare with the scaled PQN agent of \citet{castanyer2025stable} using four
paired seeds and 2.50M environment steps per game.  Five games were used
for development; the other 52 were not.
We define $\mathrm{HNS}=(R_{\mathrm{agent}}-R_{\mathrm{random}})/
(R_{\mathrm{human}}-R_{\mathrm{random}})$.
For each run, terminal HNS averages the last ten logged values, each of
which averages the 20 most recently completed training episodes.
Adjacent windows may overlap.  We form differences within seed and report
the four-seed median for each game.

\begin{figure}[H]
    \centering
    \includegraphics[width=\linewidth]{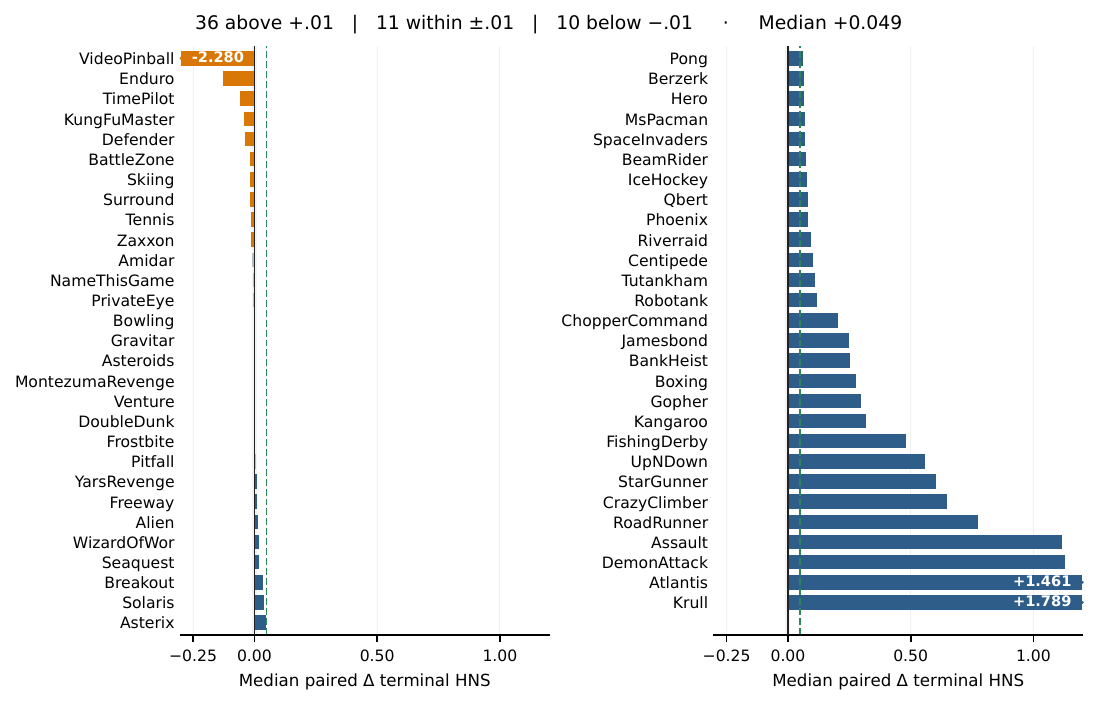}
    \caption{\textbf{S2L-4 improves the paired median on 40 of 57 Atari games.}
    Bars show median paired terminal HNS differences from scaled PQN,
    ordered from the smallest to largest effect across the two columns.
    Blue, gray, and orange indicate values above $+.01$, within $\pm.01$,
    and below $-.01$.  The dashed line marks the median across games.
    Effects outside $[-.30,+1.20]$ are annotated with their actual values.}
    \label{fig:atari57-gain}
\end{figure}

\subsection{Continuous control: larger gains late in training}
\label{sec:dmc-results}

At one million environment steps, \hthreed{} improves return in $465/720$
matched pairs, with a mean difference of $+39.09$.
The advantage emerges late: the time-averaged difference is $-12.71$ over
the first half, $+27.03$ over the second, and $+34.99$ over the final
quarter; the full-training value of $+7.16$ therefore averages two regimes
of opposite sign (Figure~\ref{fig:dmc-learning}).  Every subgroup has a
negative first-half difference but a positive endpoint mean
(Table~\ref{tab:dmc-main}).  One plausible explanation is that
source features must mature before their transfer becomes useful.

\paragraph{Setup.}
The matrix covers SAC and DDPG \citep{haarnoja2018sac,lillicrap2016ddpg},
AdamW and Kron, and four DeepMind Control Suite tasks: dog-run, dog-trot,
humanoid-run, and humanoid-walk \citep{tassa2018deepmind}.
Paired methods use the same actor architecture, individual-critic width and
depth, seed, and sample budget.  We analyze 720 full-budget pairs across
nine scales and five seeds.  Appendix~\ref{app:dmc-details} gives the coverage and scale definitions.

\paragraph{Retaining acquired return.}
\label{sec:dmc-stability}
Over 500k--1M steps, \hthreed{} has lower aggregate local fluctuation,
maximum drawdown, cumulative negative change, and worst single-evaluation
drop (Table~\ref{tab:dmc-stability}).  These return-dynamics measures quantify
retention of acquired return and are distinct from the target drift in
Section~\ref{sec:target-diagnosis}.  Appendix~\ref{app:dmc-details} gives
definitions and cross-seed variability.

\begin{figure}[!htbp]
    \centering
    \includegraphics[width=\linewidth]{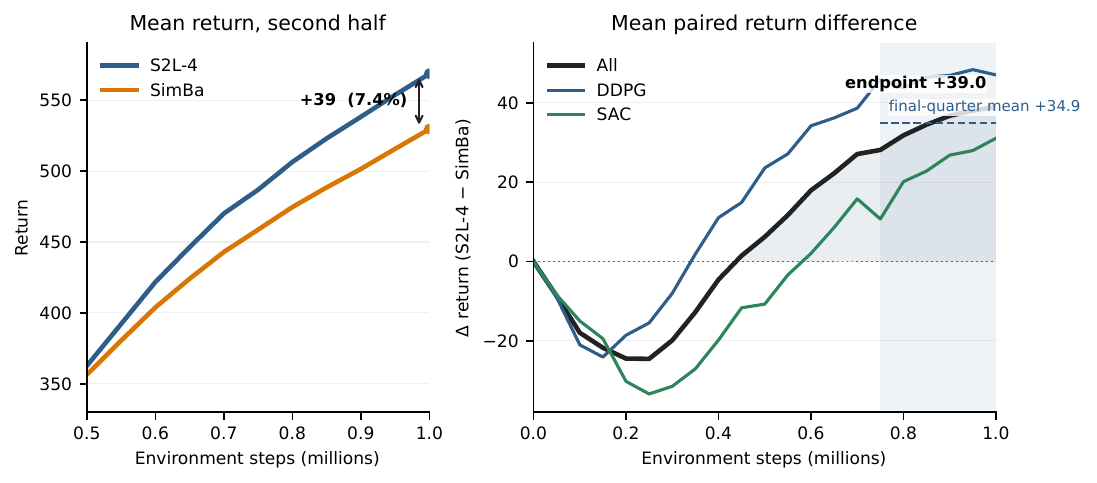}
    \caption{\textbf{The S2L-4 return advantage grows during training.}
    Both panels use the same 720 matched pairs.
    Left: mean return over the second half of training.
    Right: mean paired difference over the full run, overall and by algorithm.
    The shaded window marks the final quarter of training, over which
    the mean paired difference is $+34.99$.}
    \label{fig:dmc-learning}
\end{figure}

% Generated by scripts/make_h38_d_tables.py using reported32 aggregates.
% Historical raw SHA-256 f02d017cf5cf372579246ac9df095ec3c09bb4bd1015a28e4cb6bc1d6b367f73
% Historical late-AUC SHA-256 241e31dcb7c2f8c737ee0bf1a68b3273ee34c64a23c60c58db27d238a702c116
% Reported32 SHA-256 f576fbfeefbea65e1a738f0738afc2ffade564709ae0720a85afe9d3b3ed84b7
\begin{table}[!htbp]
\caption{\textbf{S2L gains over 720 matched pairs.}
Entries combine all 720 pairs. Differences are \hthreed{} minus SimBa; AUCs are divided by
their window length. W counts positive endpoint pairs.}
\label{tab:dmc-main}
\centering
\footnotesize
\setlength{\tabcolsep}{5.0pt}
\begin{tabular}{lrrrrrr}
\toprule
Subset & $n$ & \multicolumn{2}{c}{Final return} & \multicolumn{3}{c}{$\Delta$ AUC}\\
\cmidrule(lr){3-4}\cmidrule(lr){5-7}
& & Mean $\Delta$ & W & 0--1M & 500k--1M & 750k--1M\\
\midrule
All & 720 & 39.09 & 465 & 7.16 & 27.03 & 34.99\\
\midrule
DDPG & 360 & 47.42 & 242 & 17.29 & 40.66 & 46.66\\
SAC & 360 & 30.76 & 223 & $-2.97$ & 13.40 & 23.31\\
\midrule
AdamW & 360 & 47.49 & 252 & 16.84 & 36.61 & 42.79\\
Kron & 360 & 30.68 & 213 & $-2.52$ & 17.45 & 27.18\\
\midrule
dog-run & 180 & 58.65 & 129 & 7.24 & 30.85 & 47.04\\
dog-trot & 180 & 51.26 & 111 & 21.88 & 51.73 & 57.69\\
humanoid-run & 180 & 11.04 & 115 & $-0.10$ & 4.99 & 21.76\\
humanoid-walk & 180 & 35.41 & 110 & $-0.36$ & 20.57 & 67.17\\
\bottomrule
\end{tabular}
\end{table}

% Generated by scripts/make_h38_d_tables.py.
% Stability SHA-256 0f8c5df6a7fe37bdc7891760f2ab91fbdec0cfbc4e1aa8c2717c7adf815d5f86
\begin{table}[!htbp]
\caption{Late-training return dynamics over 720 completed matched pairs.
Statistics use eleven evaluations from 500k to 1M steps. Lower detrended
MAD means less local fluctuation; lower values in the other rows mean
smaller downward return changes. The last column counts pairs with a lower
\hthreed{} value / lower SimBa value / an exact tie.}
\label{tab:dmc-stability}
\centering
\small
\setlength{\tabcolsep}{4.0pt}
\begin{tabular}{lrrr}
\toprule
Metric & \hthreed{} & SimBa & Paired W/L/T\\
\midrule
Local fluctuation (detrended MAD) & \textbf{12.08} & 16.17 & 404/316/0\\
Normalized maximum drawdown & \textbf{0.137} & 0.164 & 426/276/18\\
Cumulative negative change & \textbf{38.26} & 45.32 & 412/290/18\\
Worst 50k-step drop & \textbf{23.85} & 26.62 & 415/287/18\\
\bottomrule
\end{tabular}
\end{table}

\section{Understanding the Design}
\label{sec:matched-mechanisms}

We next examine the three choices that distinguish \shortmethod{}:
transferring features, training the sources with smaller discounts, and
blocking the receiving loss.  We first analyse what smaller discounts do to
targets and what detachment does to feature updates, measuring target drift
directly; Atari controls then test the effect of each choice on return.

\subsection{Target sensitivity and measured drift}
\label{sec:motivation}
\label{sec:target-diagnosis}

We first isolate how the discount affects target construction, then measure
source-target drift during training.  Fix a trajectory from $t$ to truncation
index $T$ and hold rewards fixed.  For $0\leq\gamma<1$ and
$\lambda\in[0,1]$, let $G$ and $\widetilde G$ be the returns in
Eq.~\eqref{eq:qlambda} formed from bootstrap sequences $V$ and
$\widetilde V$.  Writing $\Delta V_j=V_j-\widetilde V_j$ and initially
omitting terminal masks, exact unrolling gives
\begin{equation}
 \Delta G_t
 =(1-\lambda)\sum_{k=1}^{n-1}
   \gamma^k\lambda^{k-1}\Delta V_{t+k}
 +\gamma^n\lambda^{n-1}\Delta V_T,
 \qquad n=T-t.
 \label{eq:qlambda-depth}
\end{equation}
Thus a change $k$ transitions away carries a factor $\gamma^k$; the remaining
factor depends on $\lambda$ but not on the discount.  For the same perturbation,
changing the discount from $\gamma_\ell$ to $\gamma_s<\gamma_\ell$ multiplies
its contribution by $(\gamma_s/\gamma_\ell)^k$.  At $\lambda=1$, only the
terminal term remains, with weight $\gamma^n$.  At $k=32$, changing
$\gamma=.99$ to $\gamma=.9$ reduces the transmitted perturbation by
$21.1\times$.

When bootstrap values at every depth can change, their contributions add.
If $\max_{t<j\leq T}|V_j-\widetilde V_j|\leq\epsilon$, then
\begin{equation}
 |G_t-\widetilde G_t|
 \leq c_n(\gamma,\lambda)\epsilon,\qquad n=T-t,
 \label{eq:qlambda-sensitivity}
\end{equation}
where
\begin{equation}
 c_n(\gamma,\lambda)
 =\gamma(1-\lambda)\frac{1-(\gamma\lambda)^n}{1-\gamma\lambda}
  +(\gamma\lambda)^n.
 \label{eq:qlambda-coefficient}
\end{equation}
The coefficient is tight over arbitrary bootstrap sequences in the sup norm:
equality is attained when all bootstrap changes have the same magnitude and
sign.  Standard terminal masks can only remove terms and reduce it.
Equation~\eqref{eq:qlambda-sensitivity} is the finite-horizon form of standard
$\lambda$-return sensitivity results \citep{kozuno2021peng}.  The bound formalizes the
attenuation of distant bootstrap changes on fixed trajectories.
The proof, including terminal masks, is in Appendix~\ref{app:proofs}.

\paragraph{Measurements during learning.}
We compare mixed-discount and all-$.99$ four-learner models on 16
non-development games and four fresh paired seeds.  At 51 training
checkpoints, both models recompute targets on the same fixed trajectories
from a uniform-random policy and a frozen base policy.
The primary statistic is consecutive target RMS drift divided by each
learner's iteration-1 target standard deviation; we average the three
source learners and both probe sets.  Raw RMS drift is a secondary measure.

Both measures favor mixed discounts in all 64 paired runs and all 16 game
averages (Table~\ref{tab:atari-target-stability}).  The pattern holds for each
source position and both probe sets (Appendix~\ref{app:target-stability}),
linking the target-construction analysis to lower measured source-target
drift during learning.  As a calibration, all 1,671,168 measured
distance-specific coefficients match Eq.~\eqref{eq:qlambda-depth} to within
$1.2\times10^{-8}$.

% Direct target-stability report SHA-256
% dfab13b2a528c2f94aa1fd50a7b724e01d83f1c38402e0537fe4c861e526dfa3
\begin{table}[!htbp]
\caption{\textbf{Fixed-trajectory Q($\lambda$)-target drift.} Entries are
mixed-discount minus all-$.99$ differences in time-averaged consecutive
Q($\lambda$)-target drift, so negative values favor mixed discounts.  The
normalized measure divides each learner and probe stratum by its fixed
iteration-1 target standard deviation.  Intervals use 10,000 whole-game
bootstrap resamples, retaining all four paired seeds.}
\label{tab:atari-target-stability}
\centering
\footnotesize
\setlength{\tabcolsep}{3.4pt}
\begin{tabular}{lrrrr}
\toprule
Target drift & Mean $[95\%\ \mathrm{CI}]$ & Median & Pairs $<0$ & Games $<0$\\
\midrule
Normalized (primary) & $-9.689\;[-17.638,-3.389]$ & $-2.073$ & $64/64$ & $16/16$\\
Raw RMS (secondary) & $-.170\;[-.249,-.101]$ & $-.127$ & $64/64$ & $16/16$\\
\bottomrule
\end{tabular}
\end{table}

\subsection{Gradient isolation controls feature-update gain}

Gradient isolation lets source features follow their own objectives.
By Eq.~\eqref{eq:qlambda-depth}, the longest-horizon target is the most
sensitive to the same bootstrap perturbation; moving value targets are
associated with plasticity loss \citep{sokar2023dormant}.  The following
local shared-feature model separately quantifies how each head's loss
contributes to a feature update and how detachment changes that update.

\begin{theorem}[Discount and shared-feature updates]
\label{thm:target-feature}
Let $\mathcal O$ be a finite set of value heads in the following
two-dimensional prediction model.  Define
$e_1=(1,0)^\top$, $u(\theta)=(\cos\theta,\sin\theta)^\top$, and
$Q_o=a_ou(\theta)$ for scalars $a_o,\theta$ and $o\in\mathcal O$.
The heads share the feature direction $u(\theta)$.  For a scale $R>0$, head
$o$ has discount $\gamma_o$, truncation $(n_o,\lambda_o)$, fixed point
$q_o^\star=R/(1-\gamma_o)$, and coefficient
$c_o=c_{n_o}(\gamma_o,\lambda_o)$.  Its stopped target is
\[
 Y_o=(1-c_o)q_o^\star e_1+c_o\sg[Q_o].
\]
Here $\sg$ is the identity in value and has zero derivative.  With the
Euclidean norm, let
$\ell_o=\tfrac12\|Q_o-Y_o\|_2^2$.  For a nonempty
$\mathcal W\subseteq\mathcal O$, one simultaneous semi-gradient step is
\[
 a_o^+=a_o-\eta\,\partial_{a_o}\ell_o,\qquad
 \theta^+=\theta-\eta\sum_{o\in\mathcal W}\partial_\theta\ell_o,
\]
where all targets and derivatives are evaluated before either update.  Define
\begin{equation}
 K_{\mathcal W}
 =R^2\sum_{o\in\mathcal W}
 \frac{1-c_o}{(1-\gamma_o)^2}.
 \label{eq:target-feature-gain}
\end{equation}
At $\theta^\star=0$ and $a_o^\star=q_o^\star$, the exact local
derivative of the angular update is
\begin{equation}
 \left.\frac{\partial\theta^+}{\partial\theta}\right|_{\theta^\star,a^\star}
 =1-\eta K_{\mathcal W}.
 \label{eq:target-feature-multiplier}
\end{equation}
The linearized angular update is contractive exactly when
$0<\eta K_{\mathcal W}<2$.
Removing a loss from $\mathcal W$ removes its summand from
$K_{\mathcal W}$.  For fixed $n,\lambda$,
\begin{equation}
 (1-c_n)\frac{R^2}{(1-\gamma)^2}
 \sim
 \frac{R^2}{1-\gamma}
 \sum_{j=0}^{n-1}\lambda^j,
 \qquad \gamma\uparrow1.
 \label{eq:target-feature-asymptotic}
\end{equation}
\end{theorem}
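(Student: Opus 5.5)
The plan is to compute $\partial_\theta\ell_o$ in closed form, differentiate the angular update directly, and then expand $1-c_n$ near $\gamma=1$.

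\textbf{Step 1: the semi-gradient of each head.} Because $\sg$ has zero derivative, only the first occurrence of $Q_o$ in $\ell_o$ is differentiated. Using $\partial_\theta Q_o=a_o u'(\theta)$ with $u'(\theta)=(-\sin\theta,\cos\theta)^\top$,
\[
\partial_\theta\ell_o=a_o\,u'(\theta)^\top\bigl(Q_o-Y_o\bigr).
\]
In value, $Q_o-Y_o=(1-c_o)\bigl(a_o u(\theta)-q_o^\star e_1\bigr)$. We have $u'^\top u=0$ and $u'^\top e_1=-\sin\theta$. Together these give
\[
\partial_\theta\ell_o=(1-c_o)\,a_o\,q_o^\star\sin\theta .
\]
Hence
\[
\theta^+=\theta-\eta\,\sin\theta\sum_{o\in\mathcal W}(1-c_o)\,a_o q_o^\star .
\]
All targets and derivatives are evaluated before the update, and the $a_o$ are independent coordinates. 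Differentiating in $\theta$ at $\theta^\star=0$ therefore kills the term proportional to $\sin\theta$. The remaining term gives
\[
\left.\frac{\partial\theta^+}{\partial\theta}\right|_{\theta^\star,a^\star}
=1-\eta\sum_{o\in\mathcal W}(1-c_o)(q_o^\star)^2=1-\eta K_{\mathcal W}.
\]
The last equality uses $q_o^\star=R/(1-\gamma_o)$.

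\textbf{Step 2: contraction and removing a loss.} A scalar linear map $x\mapsto(1-\eta K_{\mathcal W})x$ is contractive exactly when $|1-\eta K_{\mathcal W}|<1$, that is, when $0<\eta K_{\mathcal W}<2$. The statement about removal is immediate, since $K_{\mathcal W}$ is a sum over $\mathcal W$. In particular, a detached head contributes nothing: its loss is absent from the $\theta$ update, so $\partial_\theta\ell_o$ never enters.

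\textbf{Step 3: the asymptotic.} Write Eq.~\eqref{eq:qlambda-coefficient} as
\[
c_n=(1-\lambda)\sum_{j=0}^{n-1}\gamma^{j+1}\lambda^j+(\gamma\lambda)^n .
\]
At $\gamma=1$ the sum telescopes to $1$, so
\[
1-c_n=(1-\lambda)\sum_{j=0}^{n-1}\lambda^j\bigl(1-\gamma^{j+1}\bigr)+\lambda^n\bigl(1-\gamma^n\bigr).
\]
Divide by $1-\gamma$ and use $(1-\gamma^m)/(1-\gamma)\to m$. The limit is
\[
(1-\lambda)\sum_{j=0}^{n-1}(j+1)\lambda^j+n\lambda^n .
\]
Summation by parts turns this into $\sum_{j=0}^{n-1}\lambda^j$, which is $\ge1$. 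Since the limit is positive, we may write $1-c_n\sim(1-\gamma)\sum_{j}\lambda^j$. Multiplying by $R^2/(1-\gamma)^2$ gives Eq.~\eqref{eq:target-feature-asymptotic}. The case $\lambda=1$ uses $0^0=1$ and gives $n$.

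\textbf{Main obstacle.} The only delicate point is the bookkeeping in Step 3: one must check that the telescoping limit is strictly positive, so that the ``$\sim$'' is legitimate. The remaining steps are direct differentiation.
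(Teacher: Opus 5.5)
Your proof is correct and is essentially the paper's argument: the same value-substituted semi-gradient $\partial_\theta\ell_o=(1-c_o)a_oq_o^\star\sin\theta$ differentiated at the fixed point, and the same first-order expansion of $c_n$ about $c_n(1,\lambda)=1$, where your limit $(1-\gamma^m)/(1-\gamma)\to m$ is the paper's computation of $\partial c_n/\partial\gamma$ at $\gamma=1$, and your explicit positivity check justifying the $\sim$ is a welcome addition. Two small touch-ups: the $0^0=1$ convention is needed at $\lambda=0$, not at $\lambda=1$; and, as the paper notes, $\partial\theta^+/\partial a_o=-\eta(1-c_o)q_o^\star\sin\theta$ vanishes at $\theta^\star=0$, which is what makes the angular linearization the scalar map you analyze in Step 2.
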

The proof is in Appendix~\ref{app:proofs}.

The fixed-point value scale $R^2/(1-\gamma)^2$ grows faster than $1-c_n$
shrinks.  With the Atari parameters $(n,\lambda)=(32,.65)$, the angular
coordinate's linear-stability step-size bound is $11.6\times$ smaller at
$\gamma=.99$ than at $\gamma=.9$.  Detaching the $\gamma=.99$ loss removes
its contribution exactly, allowing source features to retain the update
dynamics set by their own objectives.  The gradient-isolation ablation in
Table~\ref{tab:s2l4-ablations} tests this prediction directly.

\subsection{Four-learner ablations and capacity control}

Table~\ref{tab:s2l4-ablations} reports the original reference scores and
seven controls for \hthreed{}.  Removing feature transfer lowers full-training
AUC by $.0797$; setting all discounts to $.99$ lowers it by $.0652$.
Allowing receiving losses to backpropagate through all three feature
connections lowers full-training and second-half AUC by $.0731$ and $.1530$.
Paired 95\% game-cluster bootstrap intervals on full-training AUC exclude
zero for feature transfer ($+.080\,[+.005,+.165]$) and for mixed discounts
($+.065\,[+.023,+.112]$) over all 57 games, and for detachment in the
matched two-learner comparison ($+.082\,[+.019,+.174]$) over the 52
non-development games; full-training game medians favor detachment on
$42/52$ of them (Appendix~\ref{app:ale57}).

A widened, parameter-matched PQN scores $.1260$ below \hthreed{} at the
endpoint.  Giving all three sources the same short discount, reversing their
horizon order, or freezing them at initialization also lowers all three
metrics.  These comparisons favor trained source features with diverse,
short-to-long horizons and isolated gradients.  Appendix~\ref{app:ale57}
defines the controls and retains the earlier two-learner gradient study.

% Generated by scripts/make_s2l4_ablation_table.py.
% Original reference SHA-256 8163f21b6a83906556f4917a3ff34ab76ca6f692c2ecc7064c0fc11e3db319fd
% Author-supplied aggregate SHA-256 2ad4755fe950744cc81cbf2903ae0e742d179d7a56e7dc2511b908344d883240
\begin{table}[!htbp]
\caption{\textbf{S2L-4 reference scores and ablations.}
Panel (a) gives the original 57-game, four-seed mean HNS scores.
Panel (b) gives changes relative to the corresponding full \hthreed{} model.
Endpoint HNS uses iteration 610; AUCs are divided by their window length.
Intervals for the original paired controls appear in Table~\ref{tab:atari-mechanism-controls}.}
\label{tab:s2l4-ablations}
\centering
\small
\setlength{\tabcolsep}{6pt}
\begin{tabular}{lrrr}
\toprule
\multicolumn{4}{l}{\textit{(a) Original reference scores}}\\
Method & Endpoint HNS & Full AUC & Second-half AUC\\
\midrule
Scaled PQN & $0.9323$ & $0.3161$ & $0.5809$\\
\hthreed{} & $1.0648$ & $0.3999$ & $0.6827$\\
\midrule
\multicolumn{4}{l}{\textit{(b) Variant minus full \hthreed{}}}\\
Variant & $\Delta$ endpoint & $\Delta$ full AUC & $\Delta$ second-half AUC\\
\midrule
No gradient isolation & $-.0971$ & $-.0731$ & $-.1530$\\
No feature transfer & $-.1010$ & $-.0797$ & $-.0914$\\
All discounts $.99$ & $-.0598$ & $-.0652$ & $-.0863$\\
Parameter-matched PQN & $-.1260$ & $-.0791$ & $-.0962$\\
Identical source discounts & $-.1093$ & $-.0689$ & $-.0874$\\
Reversed source horizons & $-.0762$ & $-.0482$ & $-.0593$\\
Frozen source networks & $-.0973$ & $-.0752$ & $-.0744$\\
\bottomrule
\end{tabular}
\end{table}

\subsection{Separate learners versus a shared trunk}
\label{sec:shared-trunk}

\sharedhead{} uses the same discounts but predicts all four values from a
single CNN--MultiSkip trunk, adapting the multi-discount auxiliary-head design
of \citet{fedus2019hyperbolic} to PQN.  Both architectures use the same
interaction budget and four paired seeds on all 57 games.  \sharedhead{} has
approximately 80.4M parameters and \hthreed{} has 321.4M; training details
are in Appendix~\ref{app:ale57}.

The interquartile mean (IQM; \citealp{agarwal2021precipice}) of paired
differences favors \hthreed{} on all four metrics
(Table~\ref{tab:atari-shared-head}); Figure~\ref{fig:atari-shared-head}
shows how this advantage develops during training.

\begin{figure}[t]
    \centering
    \includegraphics[width=\linewidth]{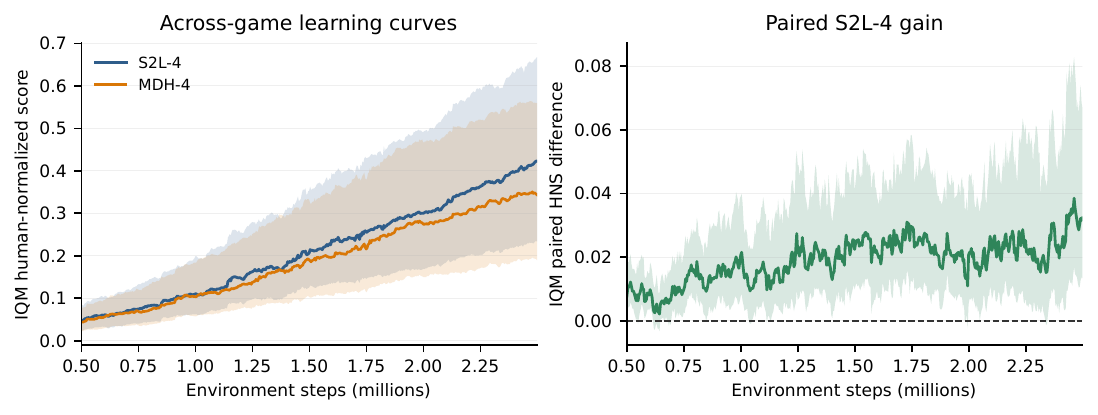}
    \caption{\textbf{S2L-4 versus the shared-trunk MDH-4 baseline.}
    Left: IQM HNS over 228 game--seed runs per method.
    Right: IQM of paired HNS differences, \hthreed{} minus \sharedhead{}.
    Shading gives 95\% game-cluster bootstrap intervals.}
    \label{fig:atari-shared-head}
\end{figure}

A widened \sharedhead{} matches the approximately 321.4M parameters of
\hthreed{} and still scores below it on every summary: mean endpoint HNS
$1.0286$ against $1.0648$, with lower full-training AUC, late-half AUC, and
last-10 HNS (Table~\ref{tab:parameter-matched-mdh4}).  Added capacity alone
therefore does not account for the gain.

\begin{table}[t]
\caption{\textbf{Shared-head control on ALE-57.}  $\Delta$ is \hthreed{}
minus the matched-discount \sharedhead{} baseline in human-normalized
score units; positive values favor \hthreed{}.  Intervals are 95\%
game-cluster bootstrap intervals for the paired-cell IQM (10,000
resamples preserving four seeds per sampled game).  Wins are listed as
seed pairs and game medians.  Terminal HNS has 5 seed-pair ties and 1 game-median tie; Terminal last-10 has 3 seed-pair ties.}
\label{tab:atari-shared-head}
\centering
\small
\begin{tabular}{lrrrr}
\toprule
Summary & Pair wins & Game wins & IQM $\Delta$ & 95\% CI\\
\midrule
\textbf{Terminal HNS} & 147/228 & 35/57 & +.0323 & $[+.0135,\,+.0667]$\\
Full-training AUC & 149/228 & 39/57 & +.0188 & $[+.0089,\,+.0343]$\\
Late-half AUC & 153/228 & 39/57 & +.0250 & $[+.0124,\,+.0489]$\\
Terminal last-10 & 152/228 & 38/57 & +.0329 & $[+.0146,\,+.0726]$\\
\bottomrule
\end{tabular}
\end{table}

\section{Offline Continuous Control}
\label{sec:offline-results}

We evaluate \shortmethod{} on nine Minari MuJoCo datasets
\citep{minari2024}: HalfCheetah, Hopper, and Walker2d, each with simple,
medium, and expert data.  We use TD3+BC
\citep{fujimoto2021minimalist} with SimBa critics, five paired seeds, and
200k updates; the matched control omits feature transfer.

\begin{table}[!htbp]
\caption{\textbf{Offline continuous control.}
Entries are S2L minus its matched TD3+BC control in normalized-score AUC;
parentheses give positive paired seeds.
All nine Minari MuJoCo conditions use five paired seeds and 200k updates.}
\label{tab:offline-rl}
\centering
\small
\setlength{\tabcolsep}{5.0pt}
\begin{tabular}{lrrrr}
\toprule
Environment & Simple & Medium & Expert & Mean \\
\midrule
HalfCheetah & $-1.63$ (0/5) & $+7.32$ (5/5) & $+17.78$ (5/5) & $+7.82$ (10/15) \\
Hopper      & $+3.39$ (5/5) & $+2.61$ (3/5) & $-0.34$ (2/5) & $+1.88$ (10/15) \\
Walker2d    & $-0.39$ (1/5) & $+0.82$ (4/5) & $+1.00$ (3/5) & $+0.47$ (8/15) \\
\midrule
All         & $+0.45$ (6/15) & $+3.58$ (12/15) & $+6.14$ (10/15) & $+3.39$ (28/45) \\
\bottomrule
\end{tabular}
\end{table}

Table~\ref{tab:offline-rl} shows a MuJoCo macro AUC gain of $+3.39$
(hierarchical-bootstrap 95\% interval $[+.17,+7.65]$), with six of nine
dataset means positive.  Appendix~\ref{app:offline-details} records the
offline transfer detail.

\section{Relation to Prior Work}

\paragraph{Multiple horizons.}
\citet{fedus2019hyperbolic} add value heads with multiple discount factors to a
shared Rainbow network, allowing all head losses to update shared features.
TD($\Delta$) represents a long-horizon value through differences between
neighboring discounts \citep{romoff2019separating}, while
\citet{tang2021taylor} derive approximations between values at different
discount factors.  \shortmethod{} instead transfers hidden activations
between separately trained networks, without combining their Q-values.

\paragraph{Representation learning under non-stationarity.}
Auxiliary control tasks \citep{jaderberg2017unreal}, observation or latent
prediction \citep{hafner2020dreamer,schwarzer2021spr}, contrastive learning
\citep{laskin2020curl}, and feature-distribution regularization
\citep{pasand2026isotropic} provide alternative ways to shape representations.
These choices affect representation dynamics
\citep{lyle2021auxiliary,lelan2023bootstrapped,tang2023representation}.
\shortmethod{} learns source features from the environment's own rewards
at shorter horizons and isolates them from the receiving loss.

\paragraph{Deep value optimization.}
PQN enables parallel, replay-free deep TD learning \citep{gallici2024simplifying};
skip connections, LayerNorm, and structured preconditioning improve its
large-network performance \citep{castanyer2025stable}.
SimBa provides a residual architecture for large actor and critic networks
\citep{lee2025simba}.  \shortmethod{} builds on these architectures while
changing which value losses may update transferred features.

\section{Discussion and Conclusion}

\shortmethod{} learns features at multiple horizons while preserving a
single long-horizon behavior objective.  Its fixed four-learner chain
improves online return across Atari and continuous control, with ablations
supporting its component choices.  The analysis links smaller discounts to
attenuated transmission of bootstrap changes into targets and detachment
to isolated feature updates;
fixed-trajectory probes show lower source-target drift during training.
Adaptive horizon selection and more efficient feature sharing are natural
extensions.

\clearpage
\subsection*{AI use statement}

Generative AI tools assisted with literature discovery, experiment
orchestration, source-code implementation and testing, statistical analysis
and interpretation, and manuscript drafting and editing.  Experimental
design and mathematical verification were carried out by the authors.
Figures and tables were produced by the analysis code rather than generated
by these tools.

\subsection*{Ethics statement}

This work studies learning algorithms in simulated Atari and continuous-control
environments and involves neither human subjects nor personal data.  Its main
direct resource costs are computation and energy.  Compared methods use
matched experiment grids.  Where available, valid checkpoints are reused to
reduce repeated computation.

\subsection*{Reproducibility statement}

Appendices~\ref{app:details}, \ref{app:ale57}, and~\ref{app:dmc-details}
record online training and evaluation settings, comparison rules, and
run-inclusion criteria.  Versioned artifacts and analysis
scripts support the empirical summaries.  Training and analysis code will be
released.  Continuous-control
summaries include only matched pairs that reached the stated interaction budget
and produced all required evaluations; exclusions are reported in the
appendix.

\bibliography{references}
\bibliographystyle{iclr2027_conference}

\appendix

\section{Proofs}
\label{app:proofs}

\begin{proof}[Proof of Eq.~\eqref{eq:qlambda-sensitivity}]
Let
\[
 G_i=r_i+\gamma m_{i+1}
 \big[(1-\lambda)V_{i+1}+\lambda G_{i+1}\big],
 \qquad G_T=V_T,
\]
where $m_j\in[0,1]$, and define $\widetilde G_i$ using the same rewards and
masks.  Write
$\Delta V_j=V_j-\widetilde V_j$,
$\Delta G_j=G_j-\widetilde G_j$, and
$M_k=\prod_{\ell=1}^k m_{t+\ell}$.  The common rewards cancel.  Repeated
substitution gives
\begin{equation}
 \Delta G_t
 =(1-\lambda)\sum_{k=1}^{n-1}
   \gamma^k\lambda^{k-1}M_k\Delta V_{t+k}
  +\gamma^n\lambda^{n-1}M_n\Delta V_T.
 \label{eq:qlambda-drift-exact}
\end{equation}
Taking absolute values and using $M_k\leq1$ proves
Eq.~\eqref{eq:qlambda-sensitivity}.  With all masks equal to one,
\[
 c_n(\gamma,\lambda)
 =(1-\lambda)\sum_{k=1}^{n-1}\lambda^{k-1}\gamma^k
  +\lambda^{n-1}\gamma^n.
\]
Choosing every $\Delta V_j$ with the same sign and magnitude attains equality,
so the coefficient is sharp.
\end{proof}

\begin{proof}[Proof of Theorem~\ref{thm:target-feature}]
Let $\delta_o=Q_o-Y_o$.  Numerically,
\[
 \delta_o=(1-c_o)(a_ou(\theta)-q_o^\star e_1),
\]
but $Y_o$ is held constant during differentiation.  With
$u_\perp(\theta)=(-\sin\theta,\cos\theta)^\top$, the gradients are
\begin{align}
 \partial_{a_o}\ell_o
 &=(1-c_o)(a_o-q_o^\star\cos\theta),\\
 \partial_{\theta}\ell_o
 &=(1-c_o)a_oq_o^\star\sin\theta,\qquad o\in\mathcal W.
\end{align}
The first equation applies to every $o\in\mathcal O$; only the second is
summed over $\mathcal W$.  All mixed derivatives between $a_o$ and $\theta$
vanish at the stated fixed point.
After forming these semi-gradients, we differentiate the numerical update
map, recomputing targets at each pre-update point.  This gives head multipliers
$1-\eta(1-c_o)$ and the feature multiplier in
Eq.~\eqref{eq:target-feature-multiplier}, with
$K_{\mathcal W}$ as in Eq.~\eqref{eq:target-feature-gain}.  The angular
multiplier has magnitude below one exactly under the stated condition.
The full linearized joint update is contractive only if, in addition,
$0<\eta(1-c_o)<2$ for every head.

Finally, fix $n\geq1$ and $\lambda\in[0,1]$.  The polynomial form of $c_n$
extends to $\gamma=1$, where $c_n(1,\lambda)=1$, and
\[
 \left.\frac{\partial c_n}{\partial\gamma}\right|_{\gamma=1}
 =\sum_{k=1}^{n-1}k(1-\lambda)\lambda^{k-1}
   +n\lambda^{n-1}
 =\sum_{j=0}^{n-1}\lambda^j.
\]
Therefore
$1-c_n(\gamma,\lambda)\sim
(\sum_{j=0}^{n-1}\lambda^j)(1-\gamma)$.  Substitution proves
Eq.~\eqref{eq:target-feature-asymptotic}; the finite sum equals $n$ at
$\lambda=1$.
\end{proof}

\section{Exact Training Details}
\label{app:details}

\begin{table}[h]
\caption{Base configuration for two-learner Atari experiments.}
\centering
\small
\begin{tabular}{ll}
\toprule
Setting & Value\\
\midrule
Algorithm & PQN with Q($\lambda$)\\
Main / short discount & .99 / .90\\
$\lambda$ & .65\\
CNN & Atari, medium: 32--64--64 channels\\
MLP & width 2560, five two-linear residual blocks\\
Stabilizers & full MultiSkip, LayerNorm, Kron\\
Environments / rollout & 128 / 32 steps\\
Minibatches / epochs & 32 / 2 (64 optimizer calls per iteration)\\
Iterations / environment steps & 610 / 2,498,560\\
Learning rate & $2.5\times10^{-4}/3$\\
Gradient clipping & separate norm 10 per learner\\
Exploration & $1.0\rightarrow0.005$ over first 10\%\\
Seeds & 1, 2, 3, 4\\
\bottomrule
\end{tabular}
\end{table}

For the two-learner variants, the behavior network is constructed first so its
initial parameters and buffers match the baseline exactly; the source network
is constructed second.  Optimizer parameters follow the deterministic order
``all behavior-network parameters, then all source-network parameters.''  Targets are computed
from detached online predictions.  The short target uses its own greedy value,
but the sampled action in both regression losses is the action actually taken
by the behavior learner.

The read-interface study uses Qbert and Phoenix with
$\gamma_s=.9$.  The horizon study uses Qbert, Phoenix, Asterix, Breakout, and
Seaquest and tests effective horizons
$H\in\{4,8,10,16,32,64,100\}$, where $\gamma_s=1-1/H$.  All comparisons use
four paired seeds.  Observations follow ALE-v5 through EnvPool with minimal
actions, frame skip 4, sticky-action probability .25, at most 30 no-ops,
episodic life, and clipped training rewards.

For these development studies, the primary statistic is trapezoidal area
under the recorded training-return curve (AUC).  For a paired candidate $X$ and reference $Y$, the sign-safe
normalized improvement is
\[
 \operatorname{SNI}(X,Y)
 =\frac{2(\operatorname{AUC}_X-\operatorname{AUC}_Y)}
 {|\operatorname{AUC}_X|+|\operatorname{AUC}_Y|},
\]
with value zero when both AUCs are zero.  Per-game source discounts maximize
the four-seed median SNI; the global source maximizes median SNI over all
twenty game--seed pairs, with direction counts and shorter horizons used only
as fixed tie-breakers.  AUC ratios and their geometric means are reported only
when all contributing AUCs are positive.

\section{Feature-Transfer Rules}
\label{sec:banks}
\label{app:read-interfaces}

The two-learner studies use a source $Q^S$ and behavior learner $Q^L$.
Each forms its own target as in Eq.~\eqref{eq:qlambda}, with discounts
$\gamma_s<\gamma_\ell=.99$, and the combined loss is
$\mathcal L_S+\mathcal L_L$.
For long-network block $l$, let $\bar h^L_l$ include the ordinary residual
and skip connections.  The general read interface is
\begin{equation}
 h^L_l=\bar h^L_l+\sg[b^S_l].
 \label{eq:longread}
\end{equation}
Here $s^S_0$ is the source input embedding and $s^S_l$ is its activation
after residual block $l$.

The Atari interface study holds both learners and their updates fixed while
changing only the detached feature $b^S_l$ supplied to long block $l$:
\begin{align}
\textsc{Input:}&& b^S_l &=s^S_0,\\
\textsc{Aligned:}&& b^S_l &=s^S_l,\\
\textsc{Delta:}&& b^S_l &=s^S_l-s^S_{l-1},\\
\textsc{Dense mean:}&& b^S_l &=\frac{1}{6}\sum_{j=0}^{5}s^S_j,\\
\textsc{Prefix mean:}&& b^S_l &=\frac{1}{l+1}\sum_{j=0}^{l}s^S_j.
\end{align}
No interface adds parameters, optimizer state, target mixing, or an additional
behavior policy.

\section{Atari Feature-Transfer Results}
\label{app:atari-read}

\begin{table}[h]
\caption{Paired training-return AUC relative to the scaled PQN base agent of
\citet{castanyer2025stable}.  GM is the geometric mean across the listed
paired seeds.  Every method uses the same $\gamma=.99$ target construction
and action-selection rule.}
\label{tab:main}
\centering
\small
\begin{tabular}{lcccccc}
\toprule
& \multicolumn{2}{c}{Qbert (4 seeds)} & \multicolumn{2}{c}{Phoenix (4 seeds)}
& \multicolumn{2}{c}{Pooled (8)}\\
\cmidrule(lr){2-3}\cmidrule(lr){4-5}\cmidrule(lr){6-7}
Transferred feature & GM & Wins & GM & Wins & GM & Wins\\
\midrule
None (baseline) & 1.000 & -- & 1.000 & -- & 1.000 & --\\
Short input & 1.173 & 4/4 & 1.173 & 4/4 & 1.173 & 8/8\\
Aligned absolute & 1.239 & 4/4 & 1.237 & 4/4 & 1.238 & 8/8\\
Aligned delta & 1.188 & 4/4 & 1.199 & 4/4 & 1.194 & 8/8\\
Dense all-layer mean & \textbf{1.277} & 4/4 & 1.240 & 4/4 & \textbf{1.258} & 8/8\\
Prefix mean & 1.222 & 4/4 & \textbf{1.264} & 4/4 & 1.243 & 8/8\\
\bottomrule
\end{tabular}
\end{table}

\begin{figure}[h]
    \centering
    \includegraphics[width=\linewidth]{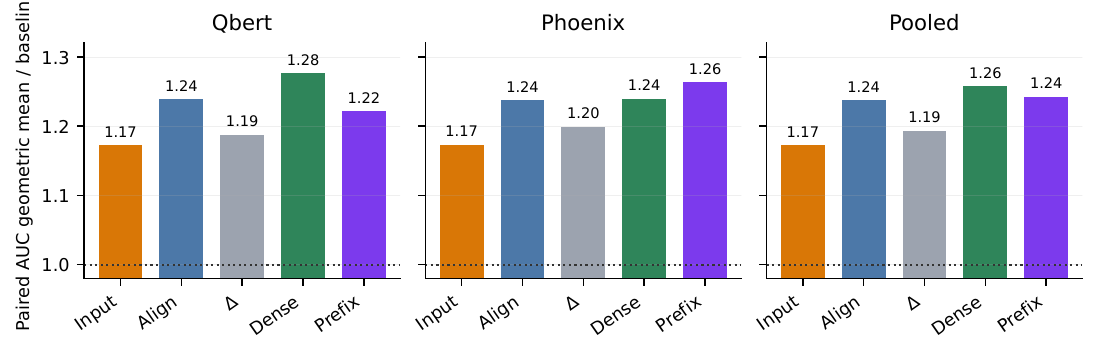}
    \caption{\textbf{Feature-transfer comparison.}  Bars show paired
    geometric-mean training-return AUC relative to the $\gamma=.99$ base
    agent on Qbert and Phoenix.  Every two-learner variant has the same
    parameter count, target construction, optimizer configuration, and action-selection rule; only the
    transferred feature changes.}
    \label{fig:read-ablation}
\end{figure}

\begin{table}[h]
\caption{Feature-transfer effects relative to the detached short-input
variant.  Parameter counts, optimizer settings, target construction, and
action-selection rules match.}
\label{tab:read}
\centering
\small
\begin{tabular}{lcccc}
\toprule
Transferred feature & Qbert GM & Phoenix GM & Pooled GM & Positive pairs\\
\midrule
Short input & 1.000 & 1.000 & 1.000 & --\\
Aligned absolute & 1.0565 & 1.0546 & 1.0556 & 6/8\\
Aligned delta & 1.0129 & 1.0221 & 1.0175 & 4/8\\
Dense all-layer mean & \textbf{1.0887} & 1.0566 & \textbf{1.0725} & 6/8\\
Prefix mean & 1.0420 & \textbf{1.0773} & 1.0595 & \textbf{7/8}\\
\bottomrule
\end{tabular}
\end{table}

\section{Five-Game Discount Selections}
\label{app:discount}

\begin{table}[h]
\caption{Per-game source-discount-factor choices from the five-game sweep.  Median SNI
compares the selected source with the scaled PQN parent of
\citet{castanyer2025stable} over four paired seeds.}
\label{tab:discount}
\centering
\small
\begin{tabular}{lccc}
\toprule
Game & Selected $\gamma_s$ & Effective horizon & Median SNI\\
\midrule
Qbert & .875 & 8 & .256\\
Phoenix & .900 & 10 & .154\\
Asterix & .875 & 8 & .106\\
Breakout & .984375 & 64 & .064\\
Seaquest & .96875 & 32 & .341\\
\bottomrule
\end{tabular}
\end{table}

\begin{figure}[h]
    \centering
    \includegraphics[width=\linewidth]{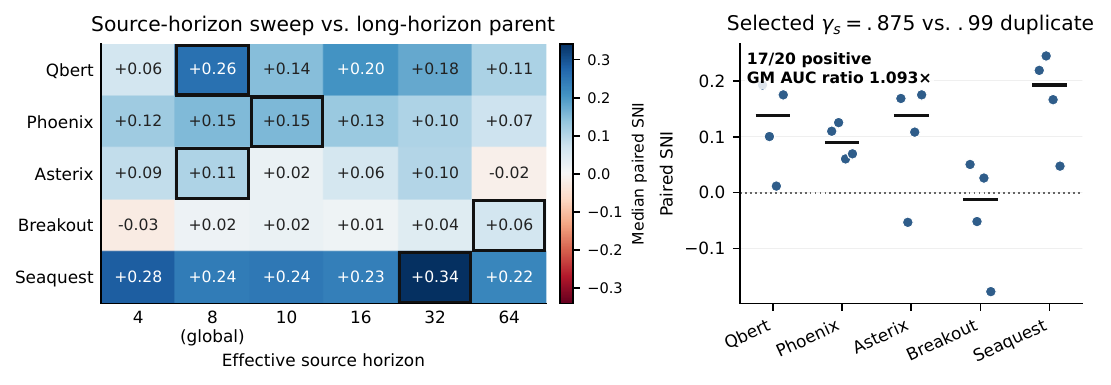}
    \caption{\textbf{Source-discount selection.}  Left: four-seed
    median paired SNI against the $\gamma=.99$ base agent across the eligible grid;
    outlined cells are in-sample per-game maxima and $H=8$ is selected globally
    on the same panel.  Right: seed-level paired SNI for that selected source
    against the matched $\gamma_s=.99$ duplicate source.  SNI is defined in
    Appendix~\ref{app:details}.}
    \label{fig:atari-horizon}
\end{figure}

\section{Full ALE-57 Evaluation}
\label{app:ale57}

The fixed four-learner model uses the discounts in
Eq.~\eqref{eq:h38d-gammas} and the aligned predecessor connection in
Eq.~\eqref{eq:h38d-chain}.  It is compared with the scaled PQN parent of
\citet{castanyer2025stable} on all 57 ALE games, four paired seeds, and
2,498,560 environment steps per run.
For each game--seed pair, terminal HNS is the mean of learner iterations
601--610 of the logged rolling-20 HNS.  Figure~\ref{fig:atari57-gain} reports
paired differences directly in HNS units, accommodating signed and near-zero
scores.  Common-support full-training AUC is a secondary curve-level summary.

\paragraph{Original paired controls.}
The full-suite control matrix uses the same ALE protocol, optimizer,
interaction budget, and four paired seeds.  The four-discount aligned detached
model is compared with a no-transfer control and a control that keeps the
connections but sets all four discounts to $.99$.  These three arms contain
$57\times4$ runs each.  The two-learner detached and writable controls contain
$52\times4$ runs each, excluding the five development games.  Both use the
fixed $\gamma_s=.875$ mean-pooled design; they differ only in whether the
$\gamma=.99$ loss is stopped at the source feature or backpropagates through
it.  Initialization, forward values before an update, targets, loss weights,
per-learner clipping, and the single Kron step are matched.

All 1,100 curves reach iteration 610 and enter the analysis.  Endpoint HNS is
the rolling-20 value at iteration 610; AUCs are normalized trapezoidal
averages over the pair-common finite support, with the late window beginning at
iteration 305.  Reported intervals use 10,000 whole-game bootstrap resamples
and preserve all four paired seeds.

% Generated by scripts/make_h41_mechanism_table.py.
% Terminal report SHA-256 f22ccf51bff1bf5886e885b27a50d1800d2f9f312ec03279f9ea6bf66187d011
\begin{table}[H]
\caption{\textbf{Matched Atari ablations.} Entries are mean paired
human-normalized-score differences. Brackets give 95\% intervals from
10,000 bootstrap resamples of games, retaining all four paired seeds.
The first two rows cover all 57 games; the last covers the 52 games not
used for development. Positive values favor the first method.}
\label{tab:atari-mechanism-controls}
\centering
\footnotesize
\setlength{\tabcolsep}{3.3pt}
\begin{tabular}{lrrr}
\toprule
Contrast & Endpoint HNS & Full AUC & Second-half AUC\\
\midrule
Transfer $-$ no transfer & $+.101\;[-.121,+.282]$ & $+.080\;[+.005,+.165]$ & $+.091\;[-.024,+.207]$\\
Mixed discounts $-$ all $.99$ & $+.060\;[-.101,+.202]$ & $+.065\;[+.023,+.112]$ & $+.086\;[+.028,+.153]$\\
Detached $-$ backpropagated & $+.114\;[+.006,+.256]$ & $+.082\;[+.019,+.174]$ & $+.122\;[+.025,+.275]$\\
\bottomrule
\end{tabular}
\end{table}

In the two-learner comparison, detachment improves endpoint, full-training
AUC, and second-half AUC (Table~\ref{tab:atari-mechanism-controls}).
Full-training game medians favor detachment on $42/52$ games, with mean and
median AUC gains of $+.0817$ and $+.0082$.

\paragraph{Extended S2L-4 controls.}
The four-network gradient ablation removes stop-gradient from all three
predecessor connections while retaining the forward computation.
The identical-source-discount variant uses $(.75,.75,.75,.99)$; the
reversed-source variant uses $(34/35,11/12,.75,.99)$.
The frozen-source variant retains detached additions but holds all three
source networks fixed at random initialization and trains only the final
network.  The parameter-matched PQN control widens a single network to match
the total capacity of \hthreed{}.  These additional controls reuse the
existing full-model reference scores and are summarized as aggregate
differences in Table~\ref{tab:s2l4-ablations}.

\paragraph{Shared-trunk comparison.}
The \sharedhead{} baseline uses one CNN--MultiSkip trunk with four Q heads at
the same discounts as \hthreed{}, $\lambda=.65$, and behavior controlled only
by the $.99$ head.  Its objective is the $.99$-head MSE plus the mean of the
three auxiliary-head MSEs.  All 57 games, four paired seeds, ALE protocol
fields, and 2,498,560-step budgets match the \hthreed{} comparison.  The primary
summary is HNS at learner iteration 610; full-training AUC, late-half AUC, and
the mean of the final ten logged returns are secondary.  The paired-cell IQM
trims 25\% from each tail, and its interval resamples whole games while
preserving all four seeds.  All summaries use signed HNS differences.
Figure~\ref{fig:atari-shared-head} starts at 0.5M steps; all 456 arm curves
have valid returns by step 425,984.

The shared-trunk implementation uses global gradient clipping and Kron
preconditioning.  \hthreed{} uses per-learner clipping and preconditioning,
a sum of the four losses, and detached feature transfer.  These settings
define the complete architectures compared in Section~\ref{sec:shared-trunk}.

\paragraph{Parameter-matched shared trunk.}
A further control widens the \sharedhead{} trunk to match the total parameter
count of \hthreed{}. All three models use the same 57 games, seeds 1--4,
and 2,498,560-step budget. Table~\ref{tab:parameter-matched-mdh4} gives
arithmetic means and standard deviations across the 228 game--seed runs.

Endpoint HNS uses iteration 610, and last-10 averages iterations 601--610
within each run. AUCs use normalized trapezoidal integration, with the late
window covering iterations 305--610. The \hthreed{} full-training AUC
reference uses common valid support with \sharedhead{}, so its mean differs
slightly from the PQN-paired reference in Table~\ref{tab:s2l4-ablations}.

% Generated by scripts/make_parameter_matched_mdh4_table.py.
\begin{table}[!htbp]
\caption{\textbf{Parameter-matched shared-trunk control on ALE-57.}
Entries are arithmetic means $\pm$ standard deviations across 57 games
and four seeds, in HNS units. Wide \sharedhead{} matches the total
parameter count of \hthreed{}.
Standard deviations describe score dispersion across game--seed cells.}
\label{tab:parameter-matched-mdh4}
\centering
\small
\setlength{\tabcolsep}{5pt}
\begin{tabular}{lrrr}
\toprule
Summary & \hthreed{} & \sharedhead{} & \sharedhead{} (wide)\\
Parameters (approx.) & 321.4M & 80.4M & 321.4M\\
\midrule
Endpoint HNS & $1.0648\,\pm\,2.2205$ & $1.0181\,\pm\,2.7107$ & $1.0286\,\pm\,2.7339$\\
Full-training AUC & $0.3997\,\pm\,0.9289$ & $0.3627\,\pm\,0.8405$ & $0.3511\,\pm\,0.9083$\\
Late-half AUC & $0.6827\,\pm\,1.3488$ & $0.6400\,\pm\,1.4027$ & $0.6174\,\pm\,1.4369$\\
Terminal last-10 & $1.0278\,\pm\,2.1008$ & $0.9896\,\pm\,2.5857$ & $0.9319\,\pm\,2.6126$\\
\bottomrule
\end{tabular}
\end{table}

\section{Fixed-Trajectory Target-Drift Protocol}
\label{app:target-stability}

The direct measurement uses 16 games selected before observing these
diagnostics by hashing the canonical names of the 52 non-development games.
For each game, four fresh paired seeds compare the mixed-discount model with
the otherwise matched all-$.99$ control.  Every learner is evaluated in situ
at 51 fixed iterations from 1 through 610; diagnostic evaluation neither
advances an environment nor changes training RNG, parameters, or optimizer
state.

Each game has two immutable probe strata shared byte-for-byte across methods,
seeds, and landmarks: 256 uniform-random and 256 frozen base-policy greedy
trajectories, each containing 32 transitions.  Observations, actions, clipped
rewards, terminal masks, and episode boundaries are therefore held fixed.  At
each landmark we recompute the implemented Q($\lambda$) target and greedy
bootstrap values.  For source learner $o$, probe stratum $p$, and landmark
$k>0$, the normalized consecutive drift is
\[
 d_{o,p,k}=
 \frac{\operatorname{RMS}_i\!\left[G_{o,p,k}(i)-G_{o,p,k-1}(i)\right]}
 {\max\{\operatorname{sd}_i[G_{o,p,0}(i)],10^{-6}\}}.
\]
We integrate $d_{o,p,k}$ by the trapezoidal rule over actual environment-step
coordinates and divide by the elapsed span.  The primary cell statistic
averages the first three learner positions with weight $1/3$ and the two probe
strata with weight $1/2$; the final $.99$ learner is a diagnostic control.
Raw RMS drift is aggregated identically.  Paired differences are formed
within game and seed, and 95\% intervals use 10,000 whole-game bootstrap
resamples that retain all four paired seeds and the complete time series.

Both measures favor mixed discounts in every paired run and every game
(Table~\ref{tab:atari-target-stability}).  The pattern also holds separately
for every primary source position and both probe strata.  As a calibration of
Eq.~\eqref{eq:qlambda-depth}, 1,671,168 measured distance-specific
coefficients have zero mismatches at tolerance $10^{-7}$; the largest absolute
error is $1.2\times10^{-8}$.  Across 417,792 within-arm monotonicity checks,
there are no violations.

\section{Continuous-Control Training Details}
\label{app:dmc-details}

\begin{table}[h]
\caption{Shared settings for the \hthreed--SimBa DMC matrix.}
\centering
\small
\begin{tabular}{ll}
\toprule
Setting & Value\\
\midrule
Tasks & dog-run, dog-trot, humanoid-run, humanoid-walk\\
Algorithms / optimizers & SAC, DDPG / AdamW, Kron\\
Environment steps / action repeat & 1,000,000 / 2\\
Train environments / updates per interaction & 1 / 2\\
Replay capacity / warm-up / batch & 1M / 5,000 / 256\\
Evaluation & 10 episodes every 50,000 environment steps\\
SimBa discount / $n$-step / target $\tau$ & .99 / 1 / .005\\
\hthreed{} discounts & $.75,\ 11/12,\ 34/35,\ .99$\\
AdamW actor/critic learning rate & $10^{-4}$\\
Kron actor/critic learning rate & $5\times10^{-5}$\\
Actor/critic weight decay & $10^{-2}$\\
Seeds & 0, 1000, 2000, 3000, 4000\\
\bottomrule
\end{tabular}
\end{table}

The nine scale points are: base $(1,128;2,512)$; depth-1/2/3
$(2/3/4,128;4/6/8,512)$; width-1/2/3
$(1,256/384/512;2,1024/1536/2048)$; and joint-1/2
$(2,256;4,1024)$ and $(3,384;6,1536)$.  Each tuple lists
$(\text{actor blocks},\text{actor width};\text{critic blocks},\text{critic
width})$.  The base point shared by the depth, width, and joint sweeps appears
once, yielding nine rather than eleven unique scales.

For every matched cell, the final \hthreed{} critic is constructed under the
same module names and before the three auxiliary critics, preserving the
baseline final-critic initialization exactly.  TD updates stop gradients
through predecessor features and sum the four per-critic MSE losses without
averaging, so each critic retains the baseline loss scale.  SAC entropy
terms enter every discount-specific target.  DDPG uses the same exploration-noise
distribution as its SimBa cell.  The actor uses only the final $.99$ critic
value, with predecessor action Jacobians live and critic parameters held fixed.

\paragraph{Evaluation and support.}
The full design contains 720 matched pairs:
two algorithms, two optimizers, four tasks, five seeds, and nine scales.
Every run trains for one million environment steps and is evaluated for ten
episodes every 50,000 steps.  Table~\ref{tab:dmc-main} combines all 720 paired
evaluations.  Means are weighted by pair count and
wins are added within each subgroup.

Performance is the paired difference \hthreed{} minus SimBa in endpoint return
or trapezoidal AUC divided by its window length.  Full AUC covers $0$--$1$M
steps; late AUC covers $500$k--$1$M or $750$k--$1$M.  Return-dynamics diagnostics use
the eleven evaluations from $500$k to $1$M.  Detrended residual MAD measures
local deviation around a linear trend; range-normalized maximum drawdown and
cumulative negative variation measure the loss of previously attained return.

\paragraph{Seed variability.}
Across 144 task--algorithm--optimizer--scale configurations with five seeds
each, the mean cross-seed endpoint MAD is 31.79 for \hthreed{} and 35.21 for
SimBa; \hthreed{} is lower in 88 configurations, SimBa in 50, with six ties.  For
$500$k--$1$M AUC, the corresponding mean cross-seed MADs are 35.97 and 46.21,
with \hthreed{} lower in 87 configurations.

\section{Offline-Control Implementation Detail}
\label{app:offline-details}

For the offline runs, detached source activations are RMS-capped at the
recipient activation RMS (cap $=1$) before transfer.

\section{Seed-Level Terminal AUC}
\label{app:seeds}

Table~\ref{tab:seedauc} gives the training-return AUC values for the
two-game development study in Appendix~\ref{app:atari-read}.
Values are divided by $10^9$ only for presentation.

\begin{table}[h]
\caption{Training-return AUC ($10^9$) for baseline (A), simple short input (B),
dense mean (E), and prefix mean (F).}
\label{tab:seedauc}
\centering
\small
\begin{tabular}{llrrrr}
\toprule
Game & Seed & A & B & E & F\\
\midrule
Qbert & 1 & 4.194 & 4.888 & 5.754 & 5.532\\
      & 2 & 3.973 & 5.363 & 5.330 & 4.760\\
      & 3 & 4.722 & 4.950 & 5.468 & 5.138\\
      & 4 & 4.365 & 5.003 & 5.441 & 5.658\\
\midrule
Phoenix & 1 & 6.956 & 8.190 & 8.086 & 8.430\\
        & 2 & 6.972 & 8.464 & 8.619 & 8.576\\
        & 3 & 6.231 & 7.204 & 8.448 & 8.520\\
        & 4 & 6.745 & 7.737 & 8.178 & 8.450\\
\bottomrule
\end{tabular}
\end{table}

\section{Two-Learner Atari Update}

\begin{enumerate}
    \item Collect a rollout with epsilon-greedy actions from $Q^L$ only.  Store
    long and short greedy values for their respective Q($\lambda$) recursions.
    \item Compute $G^L$ with $\gamma_\ell=.99$ and $G^S$ with $\gamma_s$ on the
    same rewards, masks, transitions, and minibatch permutation.
    \item For each minibatch, run the source learner to obtain
    $Q^S$ and activations $s^S_0,\ldots,s^S_5$.
    \item Form the selected feature $b^S_l$, detach it, and add it after the
    ordinary long MultiSkip addition at long block $l$.
    \item Compute $\mathcal L_L+\mathcal L_S$, backpropagate once, clip the
    behavior and source learners separately, and take one Kron step.
\end{enumerate}

\end{document}